\documentclass[sigconf]{acmart}
\usepackage{float}
\usepackage{stfloats}
\usepackage{amsthm}
\usepackage{hyperref}
\usepackage{url}
\usepackage{graphicx}
\usepackage{wrapfig}
\usepackage{bbm}
\usepackage{subfigure}
\usepackage{caption}
\usepackage{subcaption}
\usepackage{ragged2e} 
\usepackage{enumitem}
\usepackage{booktabs,makecell, multirow, tabularx}
\AtBeginDocument{%
  }

\newtheorem{lemma}{Lemma}
\setcopyright{acmlicensed}
\copyrightyear{2018}
\acmYear{2018}
\acmDOI{XXXXXXX.XXXXXXX}

\newcommand{\ie}{\emph{i.e., }}
\newcommand{\eg}{\emph{e.g., }}

\newcommand{\wrt}{\emph{w.r.t. }}

\acmConference[Conference acronym 'XX]{Make sure to enter the correct
  conference title from your rights confirmation emai}{June 03--05,
  2018}{Woodstock, NY}
\acmISBN{978-1-4503-XXXX-X/18/06}

\begin{document}

\title{Informative Graph Structure Learning}

\author{Shen Han}
\authornotemark[2]
\orcid{0000-0001-6714-5237}
\affiliation{
\institution{Zhejiang University}
\city{Hangzhou}
\country{China}}
\email{drhanshen@zju.edu.cn}

\author{Zhiyao Zhou}
\authornotemark[2]
\orcid{0009-0005-9291-169X}
\affiliation{
\institution{Zhejiang University}
\city{Hangzhou}
\country{China}}
\email{zjucszzy@zju.edu.cn}

\author{Jiawei Chen}
\authornote{Corresponding author.}
\authornote{College of Computer Science and Technology, Zhejiang University.}
\orcid{0000-0002-4752-2629}
\affiliation{
\institution{Zhejiang University}
\city{Hangzhou}
\country{China}}
\email{sleepyhunt@zju.edu.cn}

\author{Sheng Zhou}
\orcid{0000-0003-3645-1041}
\affiliation{
\institution{Zhejiang University}
\city{Hangzhou}
\country{China}}
\email{zhousheng_zju@zju.edu.cn}

\author{Canghong Jin}
\affiliation{
\institution{Hangzhou City University}
\city{Hangzhou}
\country{China}}
\email{jinch@hzcu.edu.cn}

\author{Hai Lin}
\affiliation{
\institution{China Mobile Communications Group Co.,Ltd}
\city{Hangzhou}
\country{China}}
\email{linhai34@chinaunicom.cn}

\author{Da Zhong Li}
\affiliation{
\institution{China Mobile Communications Group Co.,Ltd}
\city{Hangzhou}
\country{China}}
\email{lidazhong@chinaunicom.cn}

\author{Bingde Hu}
\authornotemark[2]
\affiliation{
\institution{Zhejiang University}
\city{Hangzhou}
\country{China}}
\email{tonyhu@zju.edu.cn}

\author{Can Wang}
\orcid{0000-0002-5890-4307}
\authornotemark[2]
\affiliation{
\institution{Zhejiang University}
\city{Hangzhou}
\country{China}}
\email{wcan@zju.edu.cn}

\renewcommand{\shortauthors}{Shen Han, et al.}
\begin{abstract}
The quality of graph-structured data is fundamental to the success of modern graph analysis techniques such as Graph Neural Networks (GNNs). 
However, real-world graph data is often suboptimal, suffering from issues such as noise and incomplete connections. 
Graph Structure Learning (GSL) has emerged as a promising technique that  adaptively optimizes node connections. 
However, we observe that the effectiveness of GSL often comes at the cost of a dramatic expansion in  edge count,  resulting in significant storage and computational overhead. 

In this work, we reveal that this limitation stems from the prevalent use of similarity-based edge construction, which predominantly connects highly similar neighbors based on their embeddings, introducing substantial structure redundancy. To address this, we propose a novel Informative Graph Structure Learning method (InGSL), which jointly considers both similarity and diversity in edge construction by incorporating a mutual-information-guided learning strategy. 
Notably, InGSL serves as a plug-in module that can be seamlessly integrated into existing GSL frameworks. Through extensive experiments on six representative GSL methods, we demonstrate that InGSL achieves significant performance improvements at a reduced number of edges.
\end{abstract}
\begin{CCSXML}
<ccs2012>
   <concept>
       <concept_id>10010147.10010257.10010293.10010294</concept_id>
       <concept_desc>Computing methodologies~Neural networks</concept_desc>
       <concept_significance>500</concept_significance>
       </concept>
   <concept>
       <concept_id>10002951.10003227.10003351</concept_id>
       <concept_desc>Information systems~Data mining</concept_desc>
       <concept_significance>300</concept_significance>
       </concept>
 </ccs2012>
\end{CCSXML}

\ccsdesc[500]{Computing methodologies~Neural networks}

\keywords{Graph structure learning, Graph neural network, Mutual information}

\maketitle

\section{INTRODUCTION}
Graph Neural Networks (GNNs)~\cite{kipf2017semisupervised,gilmer2017neural,defferrard2016convolutional,cuiyu} have achieved remarkable success in modeling and analyzing graph-structured data. To further improve their capacity, researchers have developed a wide range of advanced GNN architectures \cite{wu2019simplifying,gprgnn,deng2024polynormer,liu2024scalable}. However, the performance of these model-centric innovations is often limited by potential deficiencies in the underlying graph structure itself. In real-world scenarios,  graph data often exhibits flaws, \eg suffering from noise and missing connections, due to complex and inconsistent data collection processes~\cite{zhou2023opengsl,li2023gslb}.

To address these challenges, Graph Structure Learning (GSL) has emerged as a promising data-centric approach that aims to refine graph structures prior to downstream GNN training. GSL learns to refine node connections and edge weights, yielding improved graph structures for various downstream applications. Early GSL methods directly considered the target graph structure (\ie the adjacency matrix) as learnable parameters~\cite{franceschi2019learning,jin2020graph}, which often incurred optimization challenges due to the large parameter space.  To tackle this, recent research has shifted towards the embedding-based GSL paradigm~\cite{wang2023prose,in2024self,shen2025towards,Zeng0SGD025}, where edges are constructed based on node embedding similarity. Typically, nodes are connected to their most similar peers, thereby increasing graph homophily and improving performance on various downstream tasks.
\begin{figure}
    \centering    \includegraphics[width=0.91\linewidth, trim = 40 15 15 5]{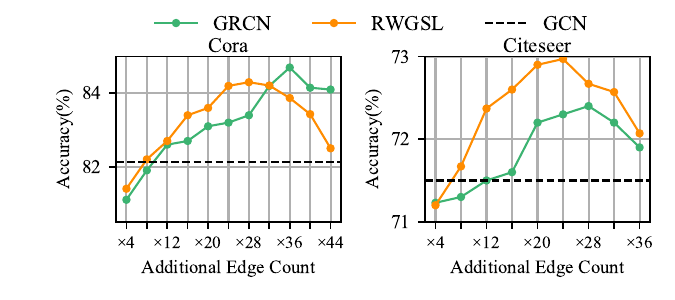}
    \setlength{\belowcaptionskip}{-15pt}
    \caption{Node classification accuracy versus additional edge count of representative GSL methods (GRCN~\cite{yu2021graph}, RWGSL~\cite{shen2025towards}) on the Cora and Citeseer datasets. The additional edge count is presented as  multiples of the original graph's edge count.}
    \label{fig:edge count}
    \vspace{0pt}
\end{figure}
However, their effectiveness often comes at the cost of a substantial increase in the number of edges. As illustrated in Figure~\ref{fig:edge count}, we observe that: 1) GSL methods require a considerable number of additional edges to achieve meaningful performance gains --- indeed, some methods even exhibit adverse effects when the additional edge count is less than a certain multiple of the original graph (\eg 4 times for GRCN~\cite{yu2021graph} and RWGSL~\cite{shen2025towards} on the Cora dataset);  2) Optimal performance is typically attained only when the number of edges increases by an order of magnitude (\eg 36 and 28 times for GRCN~\cite{yu2021graph} and RWGSL~\cite{shen2025towards} on the Cora dataset). These findings highlight a critical trade-off in existing GSL approaches, which can significantly impede their practical applications. While GSL can indeed improve downstream task performance, the resultant expansion in graph size leads to significantly increased storage demands and computational inefficiency. This limitation motivates a key research question: \textit{How can we construct an informative yet concise graph structure that enhances downstream tasks while adding limited edges?}

To address this, we first identify that the limitations of GSL primarily lie in their purely similarity-based edge construction strategies. 
Such approaches typically connect each node to its most similar neighbors, leading to high mutual similarity among these neighbors and substantial aggregated information redundancy.
This redundancy is evident in Figure \ref{fig:neighbor_sim}, where the average mutual similarities among selected neighbors remain high even when selecting a considerable number of neighbors. This causes downstream GNNs to overemphasize redundant information from similar neighbors, diluting the influence of less similar but potentially more informative nodes,  and ultimately  hindering their effectiveness.  Thus, it is essential to reconsider the node selection strategy and select less redundant neighbors.

Towards this end, we propose a novel Informative Graph Structure Learning method (InGSL), which jointly considers both similarity and informational diversity when constructing the refined graph.
Figure \ref{fig:framework} illustrates how InGSL differs from existing embedding-based GSL methods.
Specifically, for each target node, InGSL employs a learnable selection procedure, guided by mutual information between the selected neighbors and all candidate neighbors to enhance the information richness of the selected neighbors. Notably, InGSL is simple, model-agnostic, and can be seamlessly integrated with various embedding-based GSL frameworks. In our experiments, we implement InGSL in six baselines, significantly improving their performance at a reduced scale of edges.

\begin{figure}
    \centering    \includegraphics[width=1.0\linewidth, trim = 0 0 0 -20]{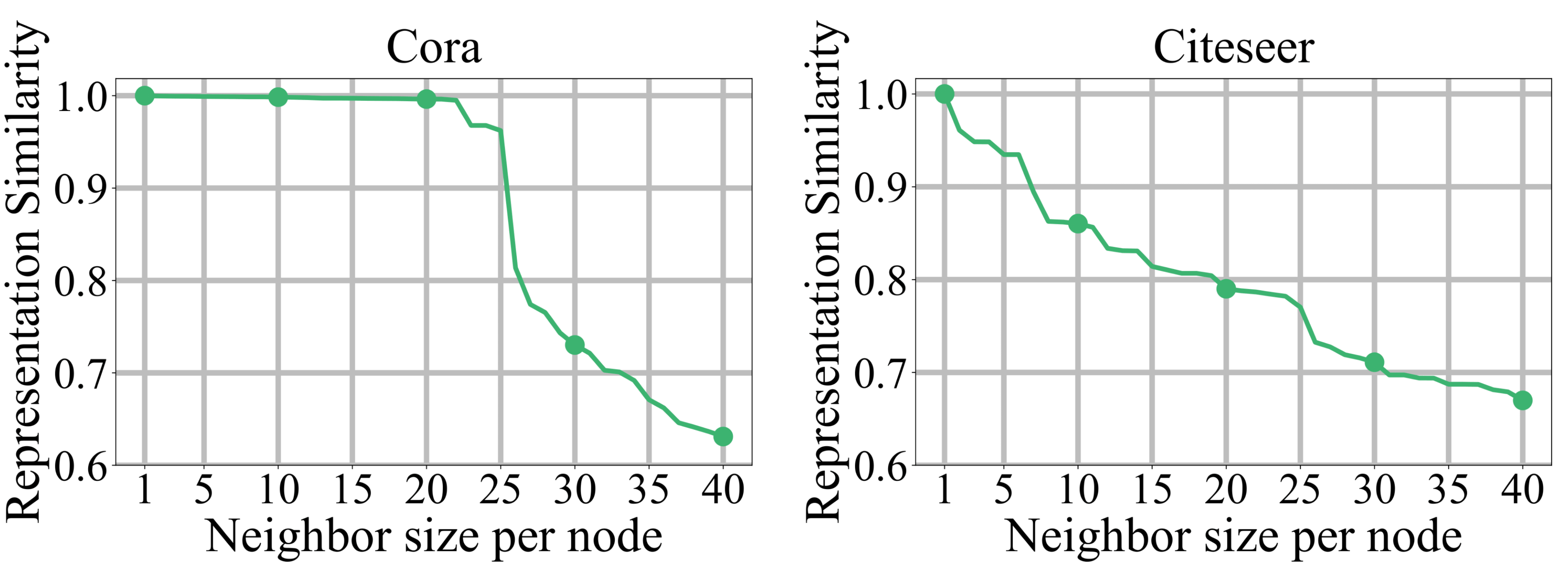}
    \setlength{\belowcaptionskip}{-10pt}
    \setlength{\abovecaptionskip}{0pt}
    \caption{Average pairwise similarity among selected neighbors versus neighbor size per node on the GRCN method~\cite{yu2021graph}.}
    \label{fig:neighbor_sim}
    \vspace{0pt}
\end{figure}

In summary, our main contributions are as follows:

\begin{itemize}
\item We reveal that the effectiveness of recent GSL methods often relies on introducing a large number of additional edges, and we attribute this limitation to their purely similarity-based neighbor selection, which tends to produce redundant edges.
\item We propose a novel informative graph structure learning strategy (InGSL) that selects both similar and diverse neighbors based on mutual information estimation, thereby reducing redundancy.
\item We conduct extensive experiments on six benchmark datasets, demonstrating the effectiveness of InGSL while introducing only a limited number of edges.
\end{itemize}


\section{Preliminaries}
\label{sec2}
\subsection{Notations} 
Suppose we have a graph $\mathcal{G}=(\mathcal{V}, \mathcal{E}, \mathbf{A}, \mathbf{X})$, where $\mathcal{V}$ is the set of $n$ nodes $\{v_1,...,v_n\}$ and $\mathcal{E}$ is the set of edges.
Let $\mathbf{A}$ be the initial adjacency matrix of the graph such that $\mathbf{A}_{ij}=1$ if an edge exists between node $v_i$ and $v_j$ and $\mathbf{A}_{ij}=0$ otherwise.
$\mathbf{X}=[x_1,...,x_n]\in \mathbb{R}^{n \times d}$ represents the node feature matrix, where each column $x_i$ corresponds to the feature vector of node $v_i$.
Let 
$\mathbf{D}$ denote the diagonal degree matrix defined as $\mathbf{D}_{ii}=1+\sum_{j}\mathbf{A}_{ij}$; 
and 
$\hat{\mathbf{A}}$ denotes the normalized adjacency matrix with self-loop, \eg $\hat{\mathbf{A}}=\mathbf{D}^{-\frac{1}{2}} (\mathbf{A}+\mathbf{I}) \mathbf{D}^{-\frac{1}{2}}$ for symmetric normalization. 

\begin{figure}[t]
\centering
\includegraphics[width=0.85\linewidth, trim = 80 20 80 -10]{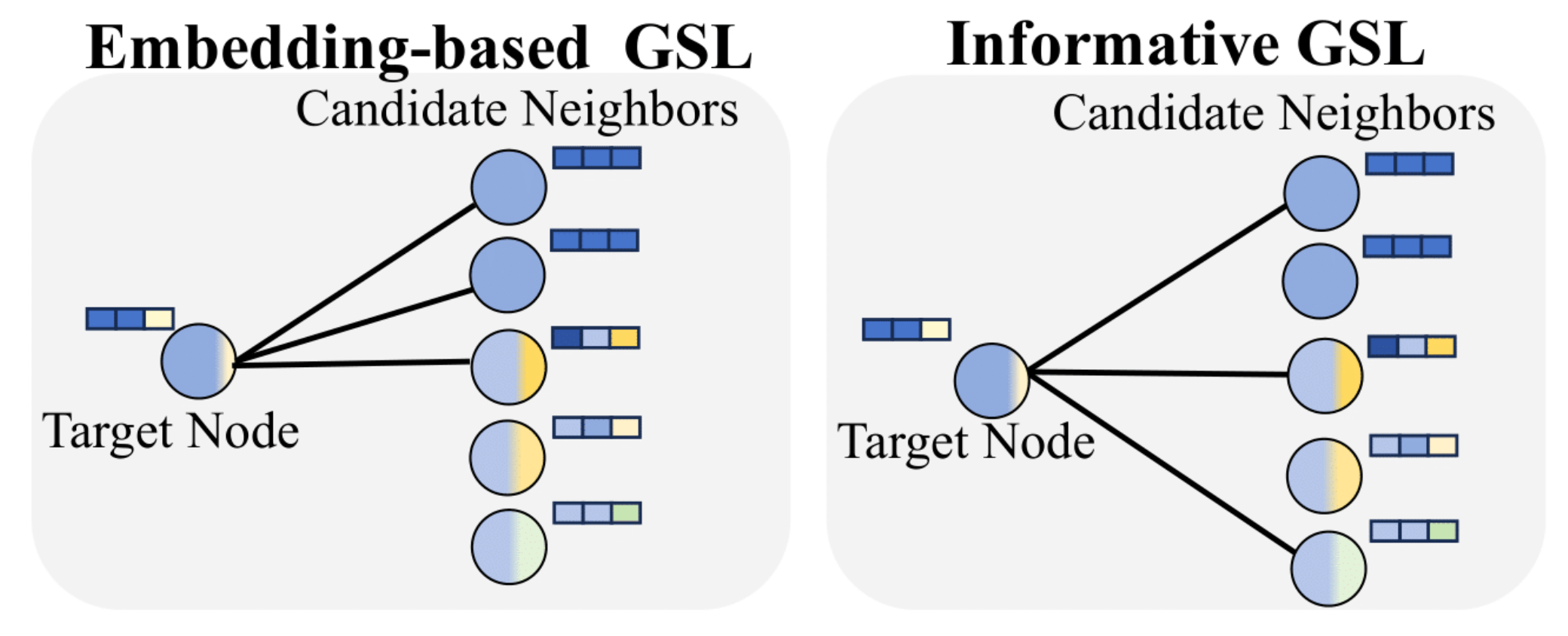}
\caption{Comparison of InGSL with existing embedding-based GSL methods. Existing embedding-based GSL methods rely primarily on embedding similarity for neighbor selection, while InGSL incorporates both similarity and information diversity.}
\label{fig:framework}
\vspace{-10pt}
\end{figure}

\subsection{Graph Neural Networks} 
Graph Neural Networks (GNNs) have emerged as a leading method for extracting knowledge from graph-structured data~\cite{kipf2017semisupervised,FengCLSZ22,GaiDZM025}.
For convenience, we use the representative Graph Convolutional Network~\cite{kipf2017semisupervised} (GCN) as an illustrative example. The operation at the $l$-th layer in GCN can be formulated as:
\begin{equation}
\mathbf{Z}^{(l)}=\text{ReLU}(\hat{\mathbf{A}}\mathbf{Z}^{(l-1)}\mathbf{W}^{(l)}),
\end{equation}
where $\mathbf{W}^{(l)}$ is a trainable weight matrix responsible for feature transformation. Remarkably, the normalized adjacency matrix $\hat{\mathbf{A}}$ constitutes a core component of GCNs, and its quality is crucial for model performance. 

\subsection{Graph Structure Learning}
Graph Structure Learning ~\cite{zhou2023opengsl} (GSL) aims to refine the adjacency matrix $\mathbf{A}$ and has demonstrated effectiveness in improving both the accuracy and robustness of GNNs~\cite{liu2022towards}. For clarity, we employ the notation $\mathbf{S}$ to represent the augmented adjacency matrix, where each entry $\mathbf{S}_{ij}$ represents the edge weight between nodes $v_{i}$ and $v_{j}$. We use $\mathbf{S}_{ij}=0$ to denote the absence of an edge. 

The adjacency matrices in GSL models are typically optimized under the supervision of downstream task performance. Taking the representative node classification task as an example, the objective function can be formulated as follows:
\begin{equation}
    \mathcal{L}_{GSL} =\mathcal{L}_{{Task}}(\text{GNN}_T(\mathbf{S},\mathbf{X}),\mathbf{Y})+\lambda\mathcal{L}_{{Reg}}(  
    \mathbf{S}),
    \label{eq_summary_gsl}
\end{equation}
where GNNs are commonly used to encode graph structures and node features into node representations and leverage the available labels $\mathbf{Y}$ as a supervised signal to jointly optimize the adjacency matrix $\mathbf{S}$ and the GNN encoder. Here, we use a subscript ($\text{GNN}_T$) to indicate that this GNN is employed for the downstream task.
The regularization term $\mathcal{L}_{{Reg}}$ is designed to enforce specific properties on the learned adjacency matrix $\mathbf{S}$, such as connectivity~\cite{zou2023se} and feature smoothness~\cite{yu2021graph}, $\lambda$ is a trade-off hyperparameter.  

Since directly optimizing $\mathbf{S}$ involves a large number of parameters, recent studies often employ an embedding-based paradigm~\cite{yu2021graph, in2024self}. Specifically, these approaches employ an additional auxiliary GNN encoder to extract node embeddings and subsequently generate the adjacency matrix based on embedding similarities:
\begin{equation}
\label{embed}
    \mathbf{E} = \text{GNN}_S(\mathbf{A},\mathbf{X}),
\end{equation}
\begin{equation}
\label{sim}
    \mathbf{S} =\text{Top}K(\mathbf{E \mathbf{E}^{T}}).
\end{equation}
where $\mathbf{{E}}$ represents a learnable node embedding matrix encoded by another GNN ($\text{GNN}_{S}$) tailored for graph construction. $\text{Top}K(\cdot)$ denotes the function that retains only the top-$K$ largest values for each row and sets others as $0$. The adjacency matrix is constructed based on the similarity between two node embeddings (\eg inner product as shown in Equation (\ref{sim}), which can also be replaced by cosine similarity~\cite{chen2020iterative,in2024self} or neural networks~\cite{liu2022compact}). Only the nodes with the top-$K$ highest similarity are connected to each node. This approach enhances graph homophily and yields better performance~\cite{zhou2023opengsl}.

\subsection{Analyses on Existing GSL Methods}
While existing embedding-based GSL methods have achieved great success, we contend that their effectiveness comes at the cost of a substantial increase in the number of edges. Specifically, we empirically investigate the performance of two representative GSL methods (GRCN ~\cite{yu2021graph} and RWGSL~\cite{shen2025towards}) on Cora and Citeseer datasets. As shown in Figure \ref{fig:edge count}, we observe that optimal performance is typically attained when adding over 20 times the original number of edges. Moreover, when imputing a limited number of edges (\eg 4 times the original), we find that GSL can have adverse effects. This edge-scale explosion significantly hinders the application of GSL. The increased edge scale not only significantly increases storage demands --- particularly considering that significantly denser graphs are highly difficult for distributed storage~\cite{disgraph}; but also incurs additional computational overhead in downstream graph analyses, given that the complexity of GNNs is often linear or super-linear with respect to the edge scale~\cite{GraphConden1,GraphConden2}. Therefore, it is essential to explore new strategies for constructing informative yet concise graph structures.

To address this challenge, we attribute the limitations of existing GSL to their conventional neighbor selection strategies. Blindly selecting the most similar neighbors may indeed increase graph homophily but can also increase redundancy in the graph structure. As shown in Figure \ref{fig:neighbor_sim}, highly similar neighbors often share mutually redundant contents, thereby contributing little additional useful information. We also demonstrate this by the following Lemma:

\begin{lemma}
\label{lemma1}

Consider a GSL method that selects neighbors based on cosine similarity of embeddings. Let N denote the number of additional neighbors whose similarity with the target node exceeds $\epsilon$. The average pairwise embedding similarity among these neighbors is bounded by:
\begin{equation}
    \overline{s} \geq \frac{N\epsilon^{2}-1}{N-1}.
\end{equation}

\label{lemma 1}
\end{lemma}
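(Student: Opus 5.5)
Normalize all embeddings to unit length, since cosine similarity depends only on direction. Let $u$ be the unit embedding of the target node and $e_1,\dots,e_N$ the unit embeddings of the $N$ selected neighbors, so that $\langle u, e_i\rangle \geq \epsilon$ for every $i$. The plan is to expand the squared norm of the neighbor sum and bound it from below using the target node $u$. I will assume $\epsilon \ge 0$, which is the regime of interest; for $\epsilon<0$ the bound is not implied by this argument.

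\textbf{Step 1 (lower bound via the target).} Put $s = \sum_{i=1}^N e_i$. By Cauchy--Schwarz and $\|u\|=1$,
\[
\|s\| \geq \langle u, s\rangle = \sum_{i=1}^N \langle u, e_i\rangle \geq N\epsilon \geq 0 .
\]
Squaring is legitimate because both sides are nonnegative, giving $\|s\|^2 \geq N^2\epsilon^2$.

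\textbf{Step 2 (expand the norm).} Write $\overline{s} = \frac{1}{N(N-1)}\sum_{i\neq j}\langle e_i,e_j\rangle$ for the average pairwise cosine similarity among the neighbors. Since each $\|e_i\|=1$,
\[
\|s\|^2 = \sum_i \|e_i\|^2 + \sum_{i\neq j}\langle e_i, e_j\rangle = N + N(N-1)\,\overline{s}.
\]

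\textbf{Step 3 (combine).} Steps 1 and 2 give $N + N(N-1)\overline{s} \geq N^2\epsilon^2$. Dividing by $N(N-1)$, which requires $N\ge 2$, yields
\[
\overline{s} \geq \frac{N\epsilon^2-1}{N-1}.
\]

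None of these steps is a real obstacle; the argument is essentially a one-line norm expansion. The points to state explicitly are:
\begin{itemize}
\item the normalization to unit vectors;
\item the sign condition $\epsilon\ge 0$ needed to square the inequality in Step 1;
\item the requirement $N\ge2$, so that pairwise similarity is defined;
\item the convention that $\overline{s}$ averages over ordered pairs $i\neq j$, which is the same as averaging over unordered pairs.
\end{itemize}
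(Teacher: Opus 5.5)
Your proof is correct and follows the paper's argument: normalize to unit vectors, apply Cauchy--Schwarz with the target's unit embedding to get $\|s\|\ge N\epsilon$, expand $\|s\|^2 = N + N(N-1)\overline{s}$, and rearrange. The paper uses $\epsilon\ge 0$ (needed to square the inequality) and $N\ge 2$ without stating them, so making them explicit is a small improvement.
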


This lemma characterizes the redundancy effect that arises from similarity‑based neighbor selection in GSL. Specifically, increasing the threshold $\epsilon$ (\ie selecting only the most similar nodes as neighbors)  raises the lower bound on the neighbors' average pairwise similarity. Consequently, the information aggregated from these neighbors becomes increasingly redundant.
The proof is presented in Appendix \ref{proofLemma1} .

We further study how the redundancy affects the effectiveness of GNNs. The following lemma formalizes the impact in a classification setting.

\begin{lemma}
\label{lemma2}
Consider a node classification problem on a graph with GNNs. Let $\mathbf{Z}_{i}^{(l)}$ denote the $l$-layer GNN representation of node $v_i$, which is subsequently passed to a linear classifier with the parameters $\textbf{W}_{c}$. Let $\mathbf{Y}_{i}$ be the one‑hot label vector of $v_{i}$, and $\|\mathbf{Z}^{(l)}_{i}\|_{2} \leq B, \forall v_{i} \in \mathcal{V}$.  Classification accuracy is quantified using the cross‑entropy loss $\mathcal{L}(\mathbf{Z}_{i}^{(l)}\mathbf{W}_{c} , \mathbf{Y}_{i}) = -\text{log}(\mathbf{Y}_{i}\cdot \text{softmax}(\mathbf{Z}_{i}^{(l)}\mathbf{W}_{c}))$. Now consider the case where the neighbors are highly similar, specifically when the cosine similarity satisfies $\cos(\mathbf{Z}_{i}^{(l)}, \mathbf{Z}_{j}^{(l)}) \geq \epsilon, \forall v_j \in \mathcal{N}(v_i)$. Under additional GNN aggregation $\mathbf{Z}_{i}^{(l+1)} = \sum_{v_j \in \mathcal{N}(v_i)} \mathbf{S}_{ij} \mathbf{Z}_{j}^{(l)}$, the classification‑accuracy gain is bounded by: 
\begin{equation}
    |\mathcal{L}(\mathbf{Z}_{i}^{(l+1)}\mathbf{W}_{c}, \mathbf{Y}_{i}) - \mathcal{L}(\mathbf{Z}_{i}^{(l)}\mathbf{W}_{c}, \mathbf{Y}_{i})| 
\leq 2 B \lVert \mathbf{W}_{c}\rVert_{2}  \sqrt{1 - \epsilon}.
\end{equation}

\label{lemma 2}
\end{lemma}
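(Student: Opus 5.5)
The plan is a Lipschitz argument in two stages. First, bound how much the loss can change when the logits change. Second, bound how far the aggregated representation $\mathbf{Z}_i^{(l+1)}$ can move away from $\mathbf{Z}_i^{(l)}$ when all neighbors are $\epsilon$-similar to $v_i$.

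\textbf{Step 1 (loss in terms of logits).} Write $\ell(\mathbf{u}) = -\log(\mathbf{Y}_i\cdot\text{softmax}(\mathbf{u}))$ for a logit vector $\mathbf{u}$. Its gradient is $\nabla \ell(\mathbf{u}) = \text{softmax}(\mathbf{u}) - \mathbf{Y}_i$. Let $p=\text{softmax}(\mathbf{u})$ and let $y$ be the true class. Then
\begin{equation*}
\|p-\mathbf{Y}_i\|_2^2=(1-p_y)^2+\sum_{k\neq y}p_k^2\le (1-p_y)^2+(1-p_y)^2\le 2,
\end{equation*}
so $\ell$ is $\sqrt{2}$-Lipschitz in $\|\cdot\|_2$. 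Applying the mean value theorem along the segment between the two logit vectors gives
\begin{equation*}
|\mathcal{L}(\mathbf{Z}_i^{(l+1)}\mathbf{W}_c,\mathbf{Y}_i)-\mathcal{L}(\mathbf{Z}_i^{(l)}\mathbf{W}_c,\mathbf{Y}_i)| \le \sqrt{2}\,\|\mathbf{W}_c\|_2\,\|\mathbf{Z}_i^{(l+1)}-\mathbf{Z}_i^{(l)}\|_2 .
\end{equation*}

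\textbf{Step 2 (representation shift).} I would take the aggregation weights to be a convex combination, that is, $\mathbf{S}_{ij}\ge 0$ and $\sum_j \mathbf{S}_{ij}=1$, as in row-normalized GSL adjacencies. Then
\begin{equation*}
\mathbf{Z}_i^{(l+1)}-\mathbf{Z}_i^{(l)}=\sum_{v_j\in\mathcal{N}(v_i)}\mathbf{S}_{ij}\bigl(\mathbf{Z}_j^{(l)}-\mathbf{Z}_i^{(l)}\bigr),
\end{equation*}
and the triangle inequality bounds its norm by $\max_j\|\mathbf{Z}_j^{(l)}-\mathbf{Z}_i^{(l)}\|_2$. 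Suppose the two vectors have equal norm $r\le B$. Expanding the squared distance gives
\begin{equation*}
\|\mathbf{Z}_j^{(l)}-\mathbf{Z}_i^{(l)}\|_2^2 = 2r^2\bigl(1-\cos(\mathbf{Z}_i^{(l)},\mathbf{Z}_j^{(l)})\bigr)\le 2B^2(1-\epsilon).
\end{equation*}
Hence $\|\mathbf{Z}_i^{(l+1)}-\mathbf{Z}_i^{(l)}\|_2\le B\sqrt{2(1-\epsilon)}$. Substituting this into Step 1 gives $\sqrt{2}\cdot\sqrt{2}\,B\|\mathbf{W}_c\|_2\sqrt{1-\epsilon}=2B\|\mathbf{W}_c\|_2\sqrt{1-\epsilon}$, which is exactly the claimed bound.

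\textbf{Main obstacle.} The delicate point is the passage from cosine similarity to Euclidean distance in Step 2. A cosine bound alone does not control the distance when norms differ. For example, two parallel vectors with norms $B$ and $0$ have cosine $1$ but distance $B$. I would therefore make explicit the implicit assumption that embeddings are normalized to a common norm, which is standard when cosine similarity drives GSL and is consistent with the setting of Lemma~\ref{lemma1}, alongside the row-stochastic $\mathbf{S}$.

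Without the equal-norm assumption, the fallback is the general identity $\|\mathbf{u}-\mathbf{v}\|^2=(\|\mathbf{u}\|-\|\mathbf{v}\|)^2+2\|\mathbf{u}\|\|\mathbf{v}\|(1-\cos)$. This only yields the stated constant after adding a norm-deviation term. So I would state the equal-norm normalization up front.
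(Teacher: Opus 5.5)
Your argument follows the paper's proof step for step: the $\sqrt{2}$-Lipschitz bound on cross-entropy (the paper only cites it, while you derive it from $\|\text{softmax}(\mathbf{u})-\mathbf{Y}_i\|_2^2\le 2$), then the convex-combination triangle inequality using $\sum_j\mathbf{S}_{ij}=1$, then the conversion $\|\mathbf{Z}_i^{(l)}-\mathbf{Z}_j^{(l)}\|_2\le B\sqrt{2-2\epsilon}$. Your explicit equal-norm hypothesis is a genuine repair, not a cosmetic addition. The paper asserts that conversion from $\cos\ge\epsilon$ and $\|\cdot\|_2\le B$ alone, which fails when the norms differ, so the lemma as literally stated is false: with $\epsilon=1$ and a single neighbor $\mathbf{Z}_j^{(l)}=2\mathbf{Z}_i^{(l)}$, the bound is $0$ but the loss generically changes, e.g.\ $\log(1+e^{-1})$ versus $\log(1+e^{-2})$ when $\mathbf{Z}_i^{(l)}\mathbf{W}_c=(1,0)$. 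Your common-norm assumption, together with nonnegative row-stochastic $\mathbf{S}$ (which the paper also introduces only inside its proof), is exactly what makes the argument valid.
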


The proof is presented in Appendix \ref{appx:lemma2}. The lemma indicates that highly similar neighbors hinder the effectiveness of GNNs---larger $\epsilon$ may lead to a smaller accuracy gain for GNNs.
Excessive aggregation of redundant information offers limited additional utility.
Empirical evidence from Figure \ref{fig:edge count} shows that GSL achieves significant performance gains only when incorporating substantial neighbors with lower similarity.
Both our theoretical and empirical analysis reveal the same limitation—relying solely on similarity-based neighbor selection increases structural redundancy and forces GSL methods to grow edge counts excessively to achieve meaningful performance gains.
The excessive focus on the most similar neighbors may have adverse effects, potentially causing pattern collapse and neglect of other informative signals. Given these analyses, it is essential to reconsider and refine existing similarity‑based neighbor selection strategies.

\section{METHODOLOGY}
\label{sec3}
Given the importance of constructing an informative graph structure that enhances downstream tasks while adding limited edges, 
we propose the informative graph structure learning (InGSL) strategy.
Going beyond similarity, InGSL introduces information diversity to guide neighbor selection.

InGSL employs a learnable neighbor-selection mechanism, guided by the mutual information between the selected and all candidate neighbors to maximize the informational richness of the selected subset. Designed as a plug-in module, InGSL can be integrated into various existing GSL methods, reducing graph edge density while preserving or even improving model performance. Concretely, InGSL primarily modifies the neighbor-selection process of existing GSL approaches. It initially follows the conventional GSL pipeline to generate candidate neighbors with the adjacency matrix $\mathbf{S}$ (which typically contains a relatively large number of neighbors to ensure effectiveness), and then refines and compacts the adjacency matrix using the following formulation:

\begin{equation}
\label{prune}
    \tilde {\mathbf{S}}_{ij} = \psi(\mathbf{S}_{ij} \cdot {w}_{ij}),
\end{equation}
where 
\begin{equation}
\label{mask}
    \psi(x) = \left\{
    \begin{array}{lr}
        \frac{1}{1+e^{-x}},  &{x \geq \epsilon,}\\
        0, &{x < \epsilon,}
    \end{array} 
    \right.
\end{equation}
Here, $w_{ij}$ denotes the learnable diversity score associated with candidate neighbor $v_j$,  which is combined with the original similarity $\mathbf{S}_{ij}$ to guide the neighbor selection process. The function $\psi(\cdot)$ performs selection and normalization, retaining important edges while normalizing their weights. The parameter $\epsilon$ serves as a threshold to filter out edges whose adjusted weight $\tilde {\mathbf{S}}_{ij}$ falls below $\epsilon$.  This threshold can be set according to the desired number of neighbors to be retained.

Estimating the diversity score $w_{ij}$ is non-trivial, and it is difficult to specify a suitable manual definition. Therefore, we adopt an adaptive learnable strategy. A direct parameterization of each \(w_{ij}\) as an independent learnable weight would result in a large parameter space, leading to optimization challenges and heavy computational overhead. To mitigate this, we re-parameterize \(w_{ij}\) through a learnable function:

\begin{equation}
\label{prob}
    {w}_{ij} = f(\mathbf{E}_{i},\mathbf{E}_{j}),
\end{equation}
where the diversity score \(w_{ij}\) is computed from the node embeddings \(\mathbf{E}_{i}\) and \(\mathbf{E}_{j}\) generated by \(\text{GNN}_S\).
These embeddings capture the characteristics of the two endpoint nodes and thus naturally provide signals regarding the informativeness of the corresponding edge.
The function \(f(\cdot, \cdot)\) is learnable, and can be implemented using various neural architectures, such as a bilinear layer or MLP. Empirically, we observe that the bilinear formulation yields the best results (see Appendix~\ref{appex:netfunc1} for implementation details). 

To ensure that InGSL captures informative yet compact structural information, we develop a mutual-information-based objective to guide the learning of the diversity score \(w_{ij}\). Specifically, we aim for the refined and compact graph structure to preserve information richness comparable to that of the original candidate set with substantial edges. The objective is:
\begin{equation}
\label{Eq:MIloss}
    \mathcal{L}_{MI} = -\hat{I}(\tilde{\mathbf{Z}}; \mathbf{Z}) = -\frac{1}{n}\sum_{i=1}^{n} \log \frac{ e^{\text{sim}(\tilde{\mathbf{Z}}_{i}, \mathbf{Z}_{i})} }{ \sum_{v_j \in \mathcal{B}} e^{\text{sim}(\tilde{\mathbf{Z}}_{i}, \mathbf{Z}_{j})} },
\end{equation}
\begin{equation}
\label{Eq:representation}
    \tilde{\mathbf{Z}} = \text{GNN}_T(\tilde{\mathbf{S}}, \mathbf{X}), \quad \mathbf{Z} = \text{GNN}_T(\mathbf{S}, \mathbf{X}),
\end{equation}
where \(\tilde{\mathbf{Z}}\) and \(\mathbf{Z}\) denote the node representations learned from the compact graph and the original constructed graph, respectively. Maximizing the mutual information between them ($\hat{I}(\tilde{\mathbf{Z}}; \mathbf{Z})$) encourages the compact graph structure to retain the original’s rich information. This objective naturally guides \(w_{ij}\) to capture informative signals while avoiding redundancy in edge selection. We approximate mutual information using a softmax loss~\cite{LiangDMZMS23}, enabling efficient implementation and optimization, where \(\mathcal{B}\) denotes the sampled node set for computation (see Appendix ~\ref{appx:MIloss} for further details).  The overall training objective is:
\begin{equation}
\label{Eq:InGSLAlllos}
    \mathcal{L} = \mathcal{L}_{GSL} + \beta \mathcal{L}_{MI},
\end{equation}
where \(\beta\) controls the relative contribution of the mutual-information term, $\mathcal{L}_{GSL}$ is the objective function of base GSL methods ( Equation (\ref{eq_summary_gsl})).

As shown, InGSL follows the same optimization procedure as the base GSL framework, with the primary distinction that neighbors are selected according to Equation (7) rather than Equation (4), thereby incorporating diversity into the selection process. In addition, a mutual‑information‑based objective is introduced to guide neighbor selection. This lightweight modification allows InGSL to be seamlessly integrated into existing GSL architectures, producing graph structures that are both compact and  informative. Moreover, the additional operations introduce minimal computational overhead. Detailed complexity analysis and runtime comparisons are provided in Appendix~\ref{appex:complex} and Appendix \ref{appex:effi}.
\begin{table}[t]
\begin{minipage}[t]{\linewidth}
\renewcommand{\arraystretch}{1.1} 
\centering
\setlength{\abovecaptionskip}{0.cm}
\caption{Detailed statistics of node classification datasets.}

\label{Tab:dataset stastics}
\setlength{\tabcolsep}{0.008\textwidth}{
{
\begin{tabular}{lcccccc}
\hline
        Dataset & \#Nodes & \#Edges & \#Feat. & \#Avg.degree  & \#Homophily \\ \hline
      
        Cora & 2,708 & 5,278 & 1,433 & 3.9 &  0.81 \\ 
        
        Citeseer & {3,327} & {4,552} & {3,703} &  {2.7} &   0.74  \\ 

        Pubmed & 19,717 & 44,324 & 500 & 4.5 &  0.80  \\
         
        Ogbn-arxiv & 169,343  & 1,157,799  &  767 & 13.67 & 0.65 \\
        BlogCatalog & 5,196 & 
        171,743 & 
        8,189 & 66.1 & 
         0.40  \\
        Roman-empire & 22,662 & 32,927 & 300 & 2.9 &  0.05  \\ 
        \hline
\end{tabular}}}
\end{minipage}%
\vspace{-10pt}
\end{table}
\begin{table*}[ht]
\renewcommand{\arraystretch}{1.2}
	\centering
    \setlength{\abovecaptionskip}{0.cm}
\caption{Node classification accuracy±std comparison(\%). 
Each experiment is repeated 5 times with different random seeds.
"OOM" denotes out of memory.
The top-performing results are marked in \textbf{bold}.}
	\label{Tab:performance}
    	\resizebox{\linewidth}{!}
{
\begin{tabular}[t!]{c|cc|cc|cc|cc|cc}
        \toprule
        {Dataset} &  \multicolumn{2}{c|}{Cora} &  \multicolumn{2}{c|}{Citeseer} & \multicolumn{2}{c|}{Pubmed}  &  \multicolumn{2}{c|}{Blogcatalog} & \multicolumn{2}{c}{Ogbn-arxiv} \\
        \midrule
        {Edge Reduction $(\%)$} & 30$\%$ & 
        50$\%$ & 30$\%$ & 50$\%$  & 30$\%$ & 50$\%$ & 30$\%$ & 50$\%$ & 30$\%$ & 50$\%$\\
        \midrule
GRCN    & 83.73±0.12 & 83.53±0.24 &{72.41±0.55} & {71.93±0.41} &{77.93±0.29} & {74.57±1.03} &{73.43±0.54} & {73.29±0.30} &OOM &OOM \\
GRCN+InGSL & \textbf{85.20±0.50} & \textbf{85.25±0.30} &\textbf{72.58±1.15} & \textbf{72.75±0.95} &\textbf{79.44±0.88} & \textbf{79.20±0.70} &\textbf{76.63±0.18} & \textbf{76.22±0.30} &{OOM} & {OOM} \\
\midrule
IDGL & 84.20±0.17 & 84.03±0.30 &{72.47±0.85} & {72.13±0.51} &{82.87±0.12} & {82.53±0.32} &{89.05±0.06} & {88.81±0.27} &{69.84±0.32} & {70.14±0.09} \\
IDGL+InGSL & \textbf{84.41±0.43} & \textbf{84.62±0.35} & \textbf{73.42±0.81} & \textbf{73.20±0.85} &\textbf{83.00±0.72} & \textbf{82.85±0.37} &\textbf{91.32±0.81} & \textbf{91.20±0.90} &\textbf{70.69±0.08} & \textbf{70.54±0.08} \\
\midrule
SUBLIME    & 82.35±0.64 & 81.55±1.34 &{71.70 ± 0.85} & {72.60±0.85} &{80.20±1.13} & {79.05±1.34} &{94.78±0.60} & {94.52±0.64} & 70.24±0.34 &69.59±0.57 \\
SUBLIME+InGSL & \textbf{83.42±0.96} & \textbf{82.95±0.56} &\textbf{73.05±0.33} & \textbf{73.36±0.49} &\textbf{81.21±0.15} & \textbf{81.50±0.30} &\textbf{95.79±0.16} & \textbf{95.48±0.28} &\textbf{71.33±0.11} & \textbf{71.23±0.31} \\
\midrule
RWGSL &{83.66±0.15} &{83.53±0.19} & {72.67±0.21} & {72.16±0.29} &{80.19±0.09} & {78.95±0.78} &{83.23±0.19} & {81.01±0.85} &{70.90±0.12} & {70.06±0.13} \\
RWGSL+InGSL & \textbf{85.47±0.32} & \textbf{84.73±0.40} & \textbf{73.67±0.81} & \textbf{73.50±0.85} &\textbf{82.43±0.60} & \textbf{82.37±0.90} &\textbf{92.39±0.21} & \textbf{91.99±0.69} &\textbf{71.72±0.21} & \textbf{70.50±0.45} \\
\midrule
UnGSL & 85.40±0.38 & 84.74±0.32 &{73.30±0.17} & {73.46±0.78} &{78.80±0.36} & {77.30±0.52} &{72.69±0.16} & {72.94±0.23} &{OOM} & {OOM} \\
UnGSL+InGSL & \textbf{85.83±0.14} & \textbf{85.17±0.71} & \textbf{73.92±0.06} & \textbf{74.35±0.40} &\textbf{80.10±0.87} &\textbf{80.37±0.57} &\textbf{73.60±0.10} &\textbf{73.58±0.43} &{OOM} & {OOM} \\
\midrule
PROSE & 81.30±0.43 & 81.57±0.24 &{72.00±0.29} & {71.60±0.22} &{82.55±0.45} & {82.35±0.05} &{75.57±0.30} & {75.44±0.54} &{68.44±0.05} & {68.40±0.15} \\
PROSE+InGSL & \textbf{81.77±0.49} & \textbf{81.73±0.39} & \textbf{72.83±0.26} & \textbf{73.33±0.33} &\textbf{82.65±0.45} &\textbf{82.50±0.40} &\textbf{76.65±0.54} &\textbf{76.48±0.26} &\textbf{71.05±0.19} & \textbf{70.71±0.10} \\
\bottomrule
\end{tabular}
} 
\end{table*}

\begin{figure*}[t]
    \centering    \includegraphics[width=0.90\linewidth, trim = 53 10 30 0]{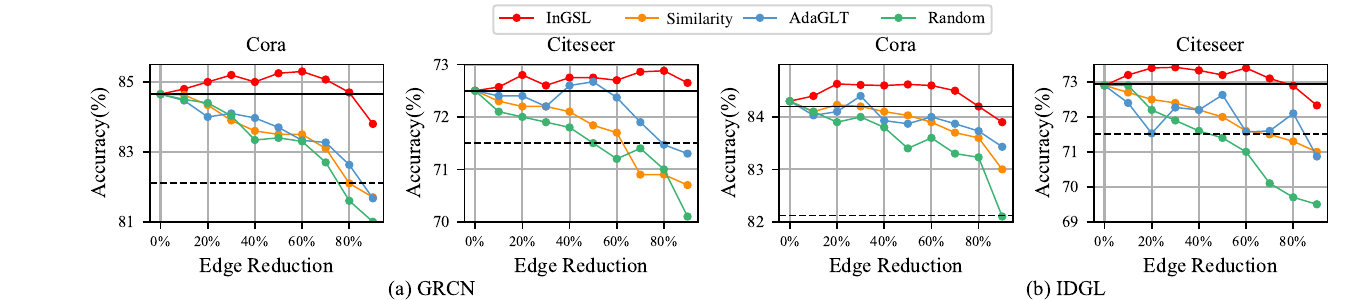}
    \caption{Performance of different edge reduction methods on GRCN and IDGL models across varying reduction levels.
    The solid line represents the accuracy of the graph structure generated by the vanilla GSL model with an optimal number of neighbors,
    and the dashed line shows the accuracy of GCN on the original graph data. }
    \label{sratio}
    \vspace{-10pt}
\end{figure*}
\section{EXPERIMENTS}
\label{sec4}
In this section, we conduct experiments to evaluate the effectiveness of the proposed InGSL strategy.
Our experiments aim to answer the following research questions:
\begin{itemize}[leftmargin=1.0em]
    \item \textbf{RQ1:} Can InGSL effectively reduce the number of edges in existing GSL methods while retaining or even improving their performance?
    \item \textbf{RQ2:} What is the impact of key configurations (\eg diversity score $w_{ij}$ , mutual-information-based objective $\mathcal{L}_{MI}$ and learnable function $f(\cdot)$) on the performance of InGSL? 
    \item \textbf{RQ3:} How well does the learned graph generated by InGSL generalize on different GNN backbones?
    \item  \textbf{RQ4:} Can InGSL enhance the robustness against random noise and graph topology attack?
\end{itemize}

\subsection{Experiment Settings}
\subsubsection{Datasets.}
To comprehensively evaluate the performance of InGSL, we closely follow previous work \cite{zhou2023opengsl} and select six widely used datasets on the node classification task, including 4 homophilous datasets ~\cite{sen2008collective,OGB} (Cora, Citeseer, Pubmed and Ogbn-arxiv) and 2 heterophilous datasets ~\cite{huang2017label,platonov2023critical} (Blogcatalog and Roman-empire). 
These datasets span a variety of homophily levels and graph sizes, allowing us to assess the effectiveness of InGSL under diverse conditions. 
For fair comparison, we strictly adhere to the data splits specified by the recently proposed GSL benchmark~\cite{zhou2023opengsl}. 
Detailed information about these datasets can be found in Table \ref{Tab:dataset stastics}.
Notably, due to the limited space, the results on the Roman-empire dataset are reported in the Appendix \ref{appex:roman}.
\subsubsection{Baselines.}
To demonstrate the versatility of InGSL across different GSL methods, we implement InGSL into six state-of-the-art GSL algorithms corresponding to the chosen datasets.
These GSL baselines include supervised methods (GRCN~\cite{yu2021graph}, IDGL~\cite{chen2020iterative}, PROSE~\cite{wang2023prose}, RWGSL~\cite{shen2025towards}), an unsupervised methods SUBLIME~\cite{liu2022towards}, and another GSL plug-in module UnGSL~\cite{han2025uncertainty} incorporated with the GRCN model which performs best as reported.
For all models, we report the average performance and standard deviation over five runs with different random seeds. 
Notably, although a few methods are not embedding-similarity-based~\cite{wang2021graph,zou2023se}, their performance is consistently inferior to that of similarity-based approaches~\cite{zhou2023opengsl} such as GRCN~\cite{yu2021graph} and IDGL~\cite{chen2020iterative}.
Thus, they are excluded from our comparative experiments.
The details of these baselines are as follows:
\begin{itemize}[leftmargin=1.0em]
    \item {GRCN} (PKDD'20) \cite{yu2021graph} first  extracts topological features and compute the similarity between nodes as the edge weights of the learned graph.
    \item {IDGL} (NIPS'20) \cite{chen2020iterative} iteratively learns both the graph structure and the node embeddings.
    \item {SUBLIME} (WWW'22) \cite{liu2022towards}  employs contrastive learning between the learned graph and an augmented graph to enhance the robustness of the graph structure.
    \item {PROSE} (KDD'23) \cite{wang2023prose} identifies influential nodes and reconstructs the graph structure by connecting these influential nodes.
    \item {RWGSL} (ICDE'25) \cite{shen2025towards} constructs the graph structure in a parameter-free manner by leveraging multiple metrics.
    \item {UnGSL} (WWW'25) \cite{han2025uncertainty} first estimates node uncertainty, then identifies and reinforces connections with high‑confidence neighbors using learnable thresholds.
\end{itemize}

\subsubsection{Configuration.}
\label{sec:conf}
For our method, hyperparameter
$\beta$ can be tuned within the range of [0,1], and the size of sampled node set in Equation (\ref{Eq:MIloss}) is fixed at 2,000. 
The hidden dimension of node embeddings is fixed at 128. 
We optimized InGSL using Adam optimizer, with the learning rate selected from the range [1e-5, 5e-2].
For the learnable function $f(\cdot)$, we use a bilinear function, as it generally achieves better overall performance. Details of the learnable function are provided in Appendix \ref{appex:netfunc1}.
Threshold $\epsilon$ is determined by both the edge reduction level and the edge weights during the training process. 
{For all baselines, we strictly adhere to their original settings for hyperparameter tuning to ensure that they attain the best performance. 
All GSL methods are evaluated based on the performance of GNNs on downstream tasks when using the learned structure.

\subsection{Main Results (RQ1)}
\subsubsection{Performance Comparison at Certain Levels.}
Table \ref{Tab:performance} presents how InGSL improves existing GSL methods with reduced edges. For a fair comparison, we select the optimal number of additional edges at which existing GSL methods achieve their best performance, and then evaluate the model performance across different levels of edge reduction of 30\% and 50\%. 
Here, we compare the performance of InGSL with that of the original GSL methods.
The results for more edge reduction levels and for the Roman‑empire dataset are provided in the Appendix \ref{appex:addresult}. 

We can observe that: 1) graphs constructed by InGSL consistently achieve significantly higher node classification accuracy compared to those generated by the base GSL model with similarity-based neighbor selection. 
2) In some cases, increasing the edge reduction level can further enhance the quality of the graph structure constructed by InGSL. 
For example, when the edge reduction level of GRCN + InGSL in the Citeseer dataset increases from 30$\%$ to 50$\%$, the accuracy also increases.
This is likely because base GSL models generate graph structures with many redundant edges.  
By connecting nodes to similar and diverse neighbors, InGSL can alleviate the dilution of informative signals from dissimilar but important neighbors.
In general, these results demonstrate InGSL’s effectiveness in selecting both similar and diverse neighbors to enhance the performance of graphs with limited edges.

\subsubsection{Performance Comparison with Varying Edge Scale.}
To further demonstrate the effectiveness of the proposed method, we compare InGSL with the following baselines: 1) Similarity-based neighbor selection; 2) AdaGLT~\cite{AdaGLT24}. To further demonstrate the effectiveness of InGSL in edge reduction, we also compare it with the representative graph sparsification module AdaGLT~\cite{AdaGLT24}, which adaptively filters out low-similarity edges for each node; 3) Random neighbor selection. We test the performance of these methods on GRCN and IDGL with varying levels of edge reduction, as reported in Figure \ref{sratio}.
We can observe that 1) InGSL consistently outperforms AdaGLT, similarity-based neighbor selection, and random reduction methods across all edge reduction levels. 
As the reduction level increases, the overall relative improvement of InGSL over the baseline also rises.
2) As the edge reduction level increases, InGSL  can surpass the performance of the vanilla GSL model. 
For example, on the Citeseer dataset, the graph structure generated by GRCN+InGSL at the 80$\%$ level significantly outperforms the base GSL model with an optimal number of neighbors.
Furthermore, the graph structure generated by IDGL+InGSL on the Cora dataset at the 70$\%$ level still achieves accuracy comparable to the vanilla GSL model.
The results demonstrate  InGSL’s effectiveness in building informative graph structures and achieving superior performance with a limited number of edges.
\begin{table}[!t]
	\centering
        \small
        \setlength{\abovecaptionskip}{0.1cm}
	\caption{Ablation study on the InGSL when integrating with SUBLIME model~\cite{liu2022towards}.}
\setlength{\tabcolsep}{0.008\textwidth}
	
{
\begin{tabular}[t!]{c|cc|cc}
        \toprule
        {Dataset} &  \multicolumn{2}{c|}{Citeseer}  &  \multicolumn{2}{c}{Blogcatlog}\\
        \midrule
        {Edge Reduction $\%$} & 30$\%$ & 
        50$\%$  & 
        30$\%$ & 
        50$\%$  
        \\
        \midrule
w/o $w_{ij} $    & 72.65±0.11 & 72.82±0.96 &  {95.01±0.24} & {94.92±0.44} \\
w/o $\mathcal{L}_{MI}$  & 72.55±0.05 & {72.70±0.51}  & {94.90±0.09} & {94.75±0.11}\\ 
InGSL  & \textbf{73.05±0.33} & \textbf{73.36±0.49}  & \textbf{95.79±0.16} & \textbf{95.48±0.28}\\ 
\bottomrule
\end{tabular}
}
\label{tab:ab}
\vspace{-10pt}
\end{table}
\begin{table}[!t]
	\centering
        \small
        \setlength{\abovecaptionskip}{0.1cm}
        \renewcommand\arraystretch{1.2}
	\caption{Performance comparison of the bilinear function and MLP used as the learnable function in Equation (\ref{prob}).}
\setlength{\tabcolsep}{0.008\textwidth}
	
{
\begin{tabular}[t!]{c|cccc}
        \toprule
        {Edge Reduction $\%$} & 30$\%$ & 
        50$\%$  & 
        70$\%$ & 
        90$\%$  
        \\
        \midrule
Similarity-based    & 83.73±0.12 & 83.53±0.24 &  {83.10±0.20} & {81.73±0.58} \\
MLP  & 84.75±0.07 & {84.82±0.28}  & {84.51±0.99} & {83.15±1.20}\\ 
Bilinear Function  & \textbf{85.20±0.50} & \textbf{85.25±0.30}  & \textbf{85.07±0.32} & \textbf{83.67±0.91}\\ 
\bottomrule
\end{tabular}
}
\label{tab:sigma}
\vspace{-9pt}
\end{table}

\subsection{Ablation Studies (RQ2)}
\subsubsection{Analysis on Diversity Score $w_{ij}$.}
To evaluate the effectiveness of the estimated information diversity in Equation (\ref{prune}), we remove the estimated diversity score $w_{ij}$ in Equation (\ref{prune}) and examine the corresponding results. As shown in Table \ref{tab:ab}, removing $w_{ij}$ leads to a performance drop. 
These results demonstrate that accounting for neighbor diversity in graph structure learning improves the accuracy of GNNs when the number of edges is limited.

\subsubsection{Analysis on mutual information term $\mathcal{L}_{MI}$.}
To assess the effectiveness of the mutual information term in Equation (\ref{Eq:InGSLAlllos}), we remove $\mathcal{L}_{MI}$ by setting $\beta=0$. 
As shown in Table \ref{tab:ab}, removing $\mathcal{L}_{MI}$ results in degraded model performance.
We further evaluate the GRCN+InGSL model by varying $\beta$ within the interval [0,1].
As shown in Figure \ref{fig:alphabeta analysis}, a positive $\beta$ consistently outperforms $\beta=0$.
These results demonstrate  that maximizing the mutual information between selected neighbors and all candidates effectively retains edges carrying important semantic information, thereby improving the performance of the constructed informative graph.

\subsubsection{Effects of Different Learnable Functions $f(\cdot)$.}
\label{netfunc}
We evaluate the effect of different learnable functions by testing the GRCN~\cite{yu2021graph} model on the Cora dataset, using either an MLP or a bilinear function as $f(\cdot)$ in Equation (\ref{prob}). As shown in Table \ref{tab:sigma}, "Similarity-based" represents the baseline where GRCN reduces edges through similarity-based neighbor selection. We observe that:
1) The bilinear function outperforms the MLP across different edge reduction levels. This can be attributed to the bilinear function’s ability to directly measure pairwise node redundancy from their embeddings. Such redundancy can then be transformed into information diversity through the learnable mutual information optimization strategy, enabling better capture of diverse information.
2) Both the MLP and the bilinear function as learnable functions deliver performance gains over the similarity-based neighbor selection, indicating that incorporating learnable neighbor selection guided by mutual information in graph structure learning can effectively eliminate redundant structures and boost GNN performance.
\begin{figure}
    \vspace{0pt}
    \centering    \includegraphics[width=0.850\linewidth, trim = 45 20 20 10]{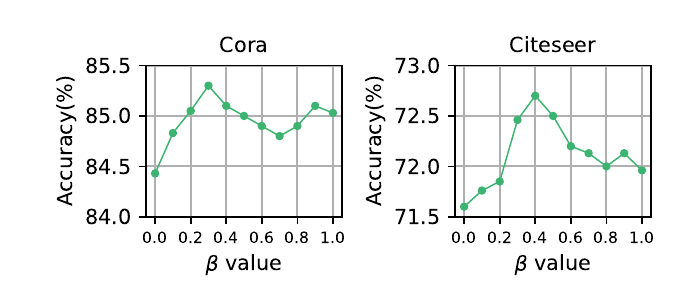}
    \caption{Comparsion of different $\beta$ on Cora and Citeseer datasets with edge reduction level of 50$\%$.}
    \label{fig:alphabeta analysis}
    \vspace{-10pt}
\end{figure}
\subsection{Generalizability Analysis (RQ3)}
To assess the generalizability of graph structures constructed by InGSL across different GNN backbones, we first pretrain the GSL+InGSL to generate informative graph structures with 50$\%$ edge reduction level. We then train different GNN backbones from scratch, using the learned informative structures as input.
We also compare the results with the graph structure generated by the vanilla GSL model with an optimal number of
neighbors (\ie without edge reduction).
We use GPRGNN~\cite{chien2020adaptive} and APPNP~\cite{gasteiger2018combining} as the GNN backbones and present a comparison of FLOPs (Floating Point Operations Per Second) and accuracy in Table \ref{tab:general}.
We observe that: 1) Informative graph structures generated by InGSL consistently outperform the original learned structures across a variety of GNN backbones. For example, on the Cora dataset with the GPRGNN backbone, a graph produced by InGSL with a 50$\%$ edge reduction achieves an average accuracy improvement of 0.6$\%$ over the structure obtained from the vanilla GSL model. These results indicate that the benefits of selecting informative neighbors generalize effectively to different GNN backbones.
2) The informative graph structures generated by InGSL can effectively reduce the FLOPs of different GNN backbones. For instance, on the Cora dataset with the APPNP model, a graph with a 50$\%$ edge reduction achieves a 34$\%$ decrease in FLOPs. This improvement stems from the neighbor‑aggregation paradigm commonly used in GNNs, where removing redundant edges reduces the number of aggregation operations and consequently lowers the overall computational cost.
Overall, the results demonstrate that the graph structures produced by InGSL exhibit strong generalization across a variety of GNN backbones, consistently delivering higher accuracy while reducing computational overhead. 
\begin{table*}[!t]
	\centering
        \tiny
        \setlength{\abovecaptionskip}{0.2cm}
	\caption{Generalizability of InGSL with different GNN backbones on Cora and Citeseer datasets. 
 The value in bold signifies the top-performing result.}
	\label{tab:general}
	\resizebox{\linewidth}{!}
{
\begin{tabular}[t!]{c|cccc|cccc}
        \toprule
        {Methods} &  \multicolumn{4}{c|}{GPRGNN} &  \multicolumn{4}{c}{APPNP} \\
        \midrule
        {Datasets}&  \multicolumn{2}{c}{Cora} &  \multicolumn{2}{c|}{Citeseer} &  \multicolumn{2}{c}{Cora} & \multicolumn{2}{c}{Citeseer}\\
        \midrule
        {} & FLOPs(M) & 
        Accuracy($\%$) & FLOPs(M) & Accuracy($\%$) & FLOPs(M) & 
        Accuracy($\%$) & FLOPs(M) & Accuracy($\%$)\\
        \midrule
GRCN    & 115 & 84.77±0.67 &{251} & {73.40±0.80} & {112} & {84.53±0.21} & {202} & {72.83±0.42} \\
GRCN+InGSL & {76} & \textbf{84.93±0.25} &{191} & \textbf{73.80±1.28} & {71} & \textbf{84.80±0.44} & {175} & \textbf{74.57±0.60}\\
\midrule
IDGL & 184 & 84.50±0.09 &{260} & {73.17±0.51} & {117} & {85.00±0.44} & {205} & {73.07±0.25} \\
IDGL+InGSL & {105} & \textbf{85.37±0.42} & {196} & \textbf{74.70±0.17} & {80} & \textbf{85.61±0.36} & {180} & \textbf{74.03±0.21} \\
\bottomrule
\end{tabular}
}
\end{table*}
\subsection{Robustness Analysis (RQ4)}

\subsubsection{Robustness Analysis \wrt Random Noise.}

To evaluate the robustness of informative graph structures generated by GSL+InGSL under structural and feature noise, we compare the accuracy of GSL models using graphs with a 50$\%$ edge reduction level generated by GRCN with similarity-based neighbor selection (\ie similarity-based edge reduction) and GRCN+InGSL across various noise levels. We also compare the results with the performance of the vanilla GRCN model without edge reduction (\ie w/o edge reduction).
For structural noise, we randomly add or remove edges from the original graph structure. For feature noise, we randomly mask the initial node features. 
Experiments are conducted on the Cora and Citeseer datasets, with results shown in Figure \ref{fig:robust1}.
The results for random feature noise are provided in Appendix \ref{appex:featnoise}.
Our observations are as follows: 1) InGSL consistently improves the performance of graph structures generated by the baseline model across different noise levels; 2) The relative improvement of InGSL increases as the noise level rises. For example, in edge deletion experiments on the Cora dataset, InGSL surpasses similarity-based neighbor selection and the vanilla GSL model by a larger margin at higher noise levels.
In summary, these results indicate that InGSL enhances robustness against both structural and feature noise.
\subsubsection{Robustness Analysis \wrt Graph Topology Attack.}
To assess the robustness of InGSL against graph topology attacks, we use the Cora and Citeseer datasets and evaluate graph structures generated by GRCN with similarity-based neighbor selection and GRCN+InGSL under a 50$\%$ edge reduction level. We further compare the results with the performance of the vanilla GRCN model (\ie without edge reduction) .
Specifically, we first select the largest connected component of the graph structure and employ MetaAttack~\cite{sun2020adversarial}, a representative non-targeted adversarial topology attack method, to generate perturbed graphs.
The perturbation rate, \ie the ratio of modified edges is varied from 0$\%$ to 25$\%$ with a step of 5$\%$.
The results are presented in Figure \ref{fig:robust3}.
Our observations are as follows: 
1) InGSL consistently outperforms both the vanilla GSL model and similarity-based neighbor selection across all perturbation ratios.
2) The relative improvement of InGSL increases as the perturbation ratio rises.
Furthermore, the improvement of InGSL over similarity-based neighbor selection is greater than that of the vanilla GSL model over similarity-based selection.
These results demonstrate the superiority of InGSL in generating refined and compact graph structures which, with a limited number of edges, are capable of significantly enhancing the robustness of base GSL models against graph topology attacks.
\begin{figure*}[!ht]
    \vspace{2pt}

    \centering    \includegraphics[width=0.898\linewidth, trim = 53 10 40 0]{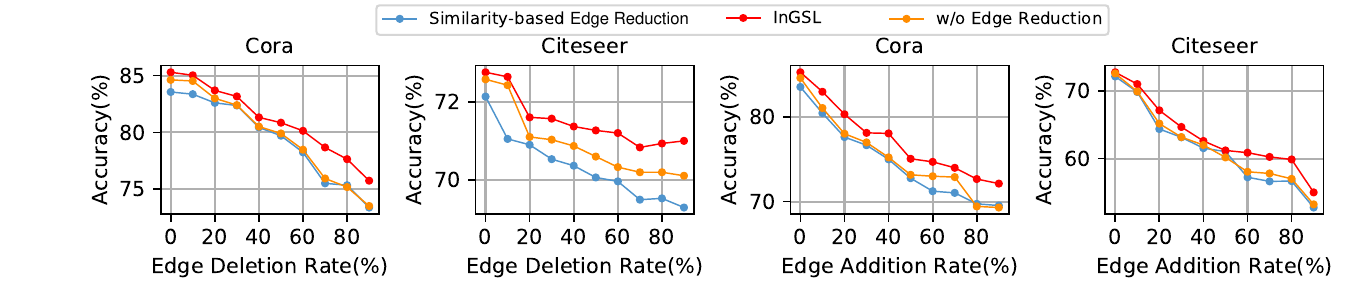}
    \caption{Robustness analysis of GRCN+InGSL with random structural noise
injection on Cora and Citeseer datasets.}
    \label{fig:robust1}
    \vspace{0pt}
\end{figure*}

\begin{figure}[t]
    \vspace{-0pt}
    \centering    \includegraphics[width=0.81\linewidth, trim = 45 15 40 10]{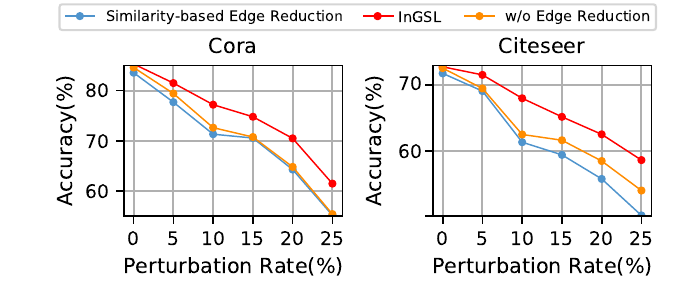}
    \caption{Robustness analysis under varying perturbation rates for graph topology attacks ($\ie$ MetaAttack~\cite{sun2020adversarial}) on the Cora and Citeseer datasets.}
    \label{fig:robust3}
    \vspace{-10pt}
\end{figure}

\section{Related Work}
\label{sec5}
\subsection{Graph Neural Networks}
Existing GNNs can be categorized into spectral and spatial methods. 
Spectral GNNs~\cite{bruna2013spectral,li2021dimensionwise,he2022convolutional,MinelloB0TC25} design signal filters based on the eigenvalues and eigenvectors of the graph Laplacian matrix to extract spectral features. 
Spatial GNNs~\cite{kipf2017semisupervised,ding2022sketch,DBLP:conf/iclr/JiangLH00CWW25} approximate graph spectral filters through a neighborhood aggregation paradigm, which aggregates and transforms features from neighbors.
However, existing GNNs often suffer from suboptimal graph quality, as real-world graphs are often noisy and dense, which degrades the performance and efficiency of GNNs~\cite{zhou2023opengsl}.
Furthermore, the neighbor aggregation scheme can be computationally inefficient for dense graphs. 
To address this limitation, some works~\cite{DiverseGraph1,DiverseGraph2} extract diverse node subsets from the original graph as a surrogate for training. 
Nevertheless, these methods still suffer from suboptimal graph quality, and fail to mitigate the issue of redundant neighbor information.
In this paper, we propose an informative graph structure learning strategy, which addresses these challenges by effectively removing redundant connections while preserving valuable edges in the structure learning process.

\subsection{Graph Sparsification}
Graph sparsification has emerged as an effective approach to reduce the computational cost for GNNs while maintaining comparable performance to the original graph.
Early methods~\cite{HermsdorffG19} relied on heuristic strategies based on graph properties (\eg node degree) to guide neighbor selection.
Mainstream approaches~\cite{chen2021unified,YouLZFL22,AdaGLT24} leverage the Graph Lottery Ticket (GLT) hypothesis, which assumes that a sparse subgraph (\ie lottery ticket) exists within the original graph and can achieve comparable performance to the complete graph. 
However, these methods cannot be directly applied to GSL models. First, they involve complex designs and are thus difficult to integrate directly into GSL methods. Furthermore, they do not explicitly consider the information diversity, resulting in poor performance when directly being used in GSL.
\subsection{Graph Structure Learning}
Early GSL~\cite{jin2020graph} methods treated the adjacency matrix as learnable parameters, resulting in significant computational overhead. 
Mainstream GSL approaches estimate edge weights based on the similarity between node embeddings, and apply similarity-based neighbor selection to remove low-similarity edges~\cite{in2024self,shen2025towards,liu2022compact}. 
Notably, a few GSL methods are not embedding-similarity–based. For instance, SEGSL~\cite{zou2023se} connects nodes by minimizing structural entropy. 
However, such methods have consistently underperformed compared with similarity-based methods~\cite{zhou2023opengsl,li2023gslb}.
Despite their superiority, we identify two limitations in embedding-similarity-based GSL methods:
First, they often require adding substantial edges in the refined graph to achieve modest improvements, greatly increasing computational costs for GNNs. Although graph sparsification methods can simplify dense structures, they either require a fixed graph to learn binary masks for edges~\cite{chen2021unified}, or neglect neighbor diversity during pruning~\cite{AdaGLT24}.
Second, these embedding-based GSL methods produce node representations that are dominated by redundant information , thereby diminishing the impact of dissimilar yet important neighbors.
In addition, some works~\cite{DSLR,Redund} have been proposed for diverse graph structure learning. 
For instance, InfoMGF~\cite{Redund} learns a fused graph structure by maximizing the mutual information between the fused graph and multiplex graphs, thereby facilitating heterogeneous graph learning.
DSLR~\cite{DSLR} extracts representative nodes from the original graph and builds edges among these nodes for graph continual learning.
However, these GSL methods still rely on similarity-based edge construction, which often results in substantial structural redundancy.
In this paper, we propose an informative graph structure learning approach (InGSL) that exploits node similarity and information diversity to eliminate redundant connections, producing a refined and compact graph structure that enhances GNN performance with limited edges.
\section{CONCLUSION}
\label{sec6}
In this paper, we reveal that existing GSL methods often produce dense graphs with substantial redundant connections and argue for the necessity of learning compact and informative graph structures.
To this end, we first conduct theoretical analyses on similarity-based edge construction strategies of existing GSL methods, and introduce InGSL as a lightweight plug‑in module designed to introduce information diversity for guiding neighbor selection.
InGSL evaluates the information diversity between each node and its candidate neighbors through an adaptively learnable mechanism, and then integrates this measure with similarity  to select neighbors.
Furthermore, InGSL uses a mutual-information-based objective to guide the learning of diversity scores, ensuring that the preserved neighbors retain information richness comparable to that of the original candidates.
Experiments demonstrate that InGSL consistently improves the performance of the learned structure across various edge reduction levels.
In the future, we aim to explore more effective strategies for accurately identifying irrelevant edges in graph structure learning.

\bibliographystyle{unsrt}
\bibliography{sample-base}

\clearpage
\appendix

\section{Proof}
\subsection{Proof of Lemma \ref{lemma1}}
\label{proofLemma1}
To prove Lemma \ref{lemma1}, we first introduce the Cauchy--Schwarz inequality below.

\begin{lemma}[Cauchy--Schwarz Inequality~\cite{bhatia1995cauchy}]
Let $\mathbf{u}$ and $\mathbf{v}$ be vectors of the same dimension. 
Then the following inequality holds:
\[
| \mathbf{u} \cdot \mathbf{v} | \leq \|\mathbf{u}\| \, \|\mathbf{v}\|,
\]
where $\|\mathbf{u}\|$ 
denotes the ${\ell}_{2}$  norm of $\mathbf{u}$.
\end{lemma}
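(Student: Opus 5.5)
We prove the inequality by the standard nonnegativity-of-a-square argument, working in $\mathbb{R}^m$ with the Euclidean inner product.

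\emph{Trivial case.} If $\mathbf{v}=\mathbf{0}$, both sides vanish and there is nothing to prove. So we assume $\mathbf{v}\neq\mathbf{0}$.

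\emph{Main step.} For a real parameter $t$, consider
\[
g(t)=\|\mathbf{u}-t\mathbf{v}\|^2=\|\mathbf{u}\|^2-2t\,(\mathbf{u}\cdot\mathbf{v})+t^2\|\mathbf{v}\|^2 .
\]
It is nonnegative for every $t$, since it is a squared norm. Rather than arguing through the discriminant, we plug in the minimizer $t^\ast=(\mathbf{u}\cdot\mathbf{v})/\|\mathbf{v}\|^2$. This gives
\[
0\le g(t^\ast)=\|\mathbf{u}\|^2-\frac{(\mathbf{u}\cdot\mathbf{v})^2}{\|\mathbf{v}\|^2}.
\]
Multiplying by $\|\mathbf{v}\|^2>0$ gives $(\mathbf{u}\cdot\mathbf{v})^2\le\|\mathbf{u}\|^2\|\mathbf{v}\|^2$. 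Taking nonnegative square roots yields $|\mathbf{u}\cdot\mathbf{v}|\le\|\mathbf{u}\|\,\|\mathbf{v}\|$. The only points needing care are the division by $\|\mathbf{v}\|^2$, which is why the zero case is handled first, and keeping the absolute value when taking square roots. We do not expect any real obstacle here.

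\emph{Use in Lemma~\ref{lemma1}.} Normalize the target embedding $\mathbf{t}$ and the $N$ neighbor embeddings $\mathbf{e}_1,\dots,\mathbf{e}_N$ to unit length, so that $\mathbf{e}_j\cdot\mathbf{t}\ge\epsilon$ for all $j$, and assume $\epsilon\ge 0$ (if $N\epsilon^2\le 1$ the bound is trivial anyway). Set $\mathbf{s}=\sum_{j}\mathbf{e}_j$. Then
\[
N\epsilon\le\mathbf{s}\cdot\mathbf{t}\le\|\mathbf{s}\|
\]
by the inequality above. On the other hand,
\[
\|\mathbf{s}\|^2=N+\sum_{j\neq k}\mathbf{e}_j\cdot\mathbf{e}_k=N+N(N-1)\,\overline{s}.
\]
Combining, $N^2\epsilon^2\le N+N(N-1)\overline{s}$, which rearranges to $\overline{s}\ge (N\epsilon^2-1)/(N-1)$.
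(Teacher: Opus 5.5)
Your proof is correct and complete for real vectors, and it takes a different route only in the sense that the paper gives no proof here at all.

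The paper simply quotes the inequality from the literature and uses it as a black box in Appendix~\ref{proofLemma1}. Your minimizer argument for $\|\mathbf{u}-t\mathbf{v}\|^2$ makes that dependency self-contained; the discriminant proof or the Lagrange identity would serve equally well. You also handle the only delicate points, the case $\mathbf{v}=\mathbf{0}$ and the absolute value when taking square roots.

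Your sketch of the application is exactly the paper's proof of Lemma~\ref{lemma1}: form the sum vector, use $\mathbf{s}\cdot\mathbf{t}\ge N\epsilon$, apply Cauchy--Schwarz, and expand $\|\mathbf{s}\|^2$. You improve on the paper by noticing that squaring $N\epsilon\le\|\mathbf{s}\|$ requires $\epsilon\ge 0$, which the paper leaves implicit.

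Your parenthetical justification for that assumption is wrong, however. For $\epsilon<0$ the bound is not trivial; it can fail. Take $N=2$ and unit vectors $\mathbf{e}_1=-\mathbf{e}_2$ orthogonal to $\mathbf{t}$. These satisfy $\mathbf{e}_j\cdot\mathbf{t}=0\ge\epsilon$, yet $\overline{s}=-1<2\epsilon^2-1$. So $\epsilon\ge 0$ must be stated as a hypothesis of Lemma~\ref{lemma1}, not treated as a without-loss-of-generality reduction.
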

\textit{Proof.}
Note that we simplify the GSL framework by using the same GNN for both $\text{GNN}_{T}$ and $\text{GNN}_{S}$.
For a given node $v_{i}$, let $\mathcal{N}(v_{i})$ denote the additional neighbor set of $v_{i}$ and $|\mathcal{N}(v_{i})| = N$, $ \mathbf{v}_{i} = \frac{\mathbf{E_{i}}}{||\mathbf{E}_{i}||}$ and $ \mathbf{u}_{j} = \frac{\mathbf{E_{j}}}{||\mathbf{E}_{j}||}, \forall v_{j} \in \mathcal{N}(v_{i})$, then $\mathbf{u}_j$ and $\mathbf{v}_{i}$ are unit vectors and satisfy
$\mathbf{u}_j \cdot \mathbf{v}_{i} \ge \epsilon$. The average pairwise cosine similarity $\overline{s}$ among $N(v_{i})$ is:
\begin{equation}
\overline{s} = \frac{2}{|N(v_{i})|(|N(v_{i})|-1)} \sum_{j<k} (\mathbf{u}_j \cdot \mathbf{u}_k).
\end{equation}
Let  $\mathbf{c} = \sum\limits_{j=1}^{|\mathcal{N}(v_{i})|} \mathbf{u}_j$. Then
\begin{align}
    \|\mathbf{c}\|^2 &= \sum\limits_{j=1}^{|\mathcal{N}(v_{i})|} ||\mathbf{u}_j||^{2} + 2 \sum_{j<k} (\mathbf{u}_j \cdot \mathbf{u}_k)\nonumber\\
    &=|\mathcal{N}(v_{i})| + |\mathcal{N}(v_{i})|(|\mathcal{N}(v_{i})|-1) \cdot \overline{s},
\end{align}
Since $\forall v_{j} \in \mathcal{N}(v_{i}),  \mathbf{u}_j \cdot \mathbf{v}_{i} \ge \epsilon$, we have:
\begin{equation}
\label{eq:15}
\mathbf{c} \cdot \mathbf{v}_{i} = \sum_{j=1}^{|\mathcal{N}(v_{i})|} (\mathbf{u}_j \cdot \mathbf{v}_{i}) \ge |\mathcal{N}(v_{i})| \epsilon.
\end{equation}
By the Cauchy–Schwarz inequality,
\begin{equation}
\|\mathbf{c}\|\cdot\|\mathbf{v}_{i}\| =\|\mathbf{c}\| \ge \mathbf{c} \cdot \mathbf{v}_{i} \ge |N(v_{i})| \epsilon,
\end{equation}
hence
\begin{align}
   \|\mathbf{c}\|^2 &= |N(v_{i})| + |N(v_{i})|(|N(v_{i})|-1) \cdot \overline{s}\ge|\mathcal{N}(v_{i})|^2 \epsilon^2,
\end{align}
we obtain
\begin{equation}
\overline{s} \ge \frac{N \epsilon^2 - 1}{N - 1},
\end{equation}
which completes the proof.
\subsection{Proof of Lemma \ref{lemma2}}
\label{appx:lemma2}
To prove Lemma \ref{lemma2}, we first introduce the triangle inequality below.
\begin{lemma}[Triangle Inequality~\cite{elkan2003using}]
    For any vectors \(x_{1}, x_{2}, \dots, x_{n} \in \mathbb{R}^{d}\) and non‑negative weights \(\alpha_{1}, \alpha_{2}, \dots, \alpha_{n}\) satisfying \(\sum_{k=1}^{n} \alpha_{k} = 1\), we have  
\[
\left\| \sum_{k=1}^{n} \alpha_{k} x_{k} \right\|_{2} 
\leq \sum_{k=1}^{n} \alpha_{k} \| x_{k} \|_{2}.
\]
\end{lemma}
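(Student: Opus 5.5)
The statement follows from the two defining properties of a norm: subadditivity, $\|x+y\|_2\le\|x\|_2+\|y\|_2$, and absolute homogeneity, $\|\alpha x\|_2=|\alpha|\,\|x\|_2$. I will establish the two-term case first and then extend it to $n$ terms by induction. The normalization $\sum_k\alpha_k=1$ plays no role in the inequality itself; the bound holds for arbitrary non-negative weights. The normalization only matters for how the lemma is used later, since it makes $\sum_k \alpha_k x_k$ a convex combination, which is the form of the aggregation $\sum_{v_j}\mathbf{S}_{ij}\mathbf{Z}^{(l)}_j$ in Lemma~\ref{lemma2}.

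\textbf{Step 1 (subadditivity of the $\ell_2$ norm).} Expand
\[
\|x+y\|_2^2=\|x\|_2^2+2\,x\cdot y+\|y\|_2^2 .
\]
Apply the Cauchy--Schwarz inequality stated earlier to get $x\cdot y\le |x\cdot y|\le\|x\|_2\|y\|_2$. The right-hand side is then at most $(\|x\|_2+\|y\|_2)^2$, and taking square roots gives $\|x+y\|_2\le\|x\|_2+\|y\|_2$.

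\textbf{Step 2 (induction on $n$).} The case $n=1$ is immediate: $\|\alpha_1x_1\|_2=\alpha_1\|x_1\|_2$, because $\alpha_1\ge 0$. For the inductive step, split off the last term:
\[
\Bigl\|\sum_{k=1}^{n}\alpha_kx_k\Bigr\|_2\le\Bigl\|\sum_{k=1}^{n-1}\alpha_kx_k\Bigr\|_2+\|\alpha_nx_n\|_2 .
\]
This uses Step 1. Bound the first term by the inductive hypothesis, which I state for arbitrary non-negative weights so that no renormalization is needed. Bound the second term by homogeneity, $\|\alpha_nx_n\|_2=\alpha_n\|x_n\|_2$, again using $\alpha_n\ge 0$. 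Adding the two bounds gives the claim.

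There is no real obstacle. The only point needing care is that non-negativity of the $\alpha_k$ is what allows $|\alpha_k|$ to be replaced by $\alpha_k$. If one instead insisted on inducting with normalized weights, the inductive step would require rescaling by $1-\alpha_n$, including a separate trivial case when $\alpha_n=1$. Stating the induction for general non-negative weights avoids this entirely.
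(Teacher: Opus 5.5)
Your proof is correct. The paper does not prove this statement at all. It quotes it as a standard fact with a citation and invokes it once, in the proof of Lemma~\ref{lemma2}. So there is no competing route to compare against: your argument (Cauchy--Schwarz to get two-term subadditivity, then induction using absolute homogeneity and $\alpha_k \ge 0$) simply supplies the routine derivation the paper takes for granted.

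Your remark that $\sum_k \alpha_k = 1$ plays no role in the inequality is also accurate. In the paper, the normalization $\sum_{v_j}\mathbf{S}_{ij}=1$ does its work elsewhere in the proof of Lemma~\ref{lemma2}:
\begin{itemize}
\item before the inequality is applied, it lets $\mathbf{Z}_i^{(l)}-\mathbf{Z}_i^{(l+1)}$ be rewritten as $\sum_{v_j}\mathbf{S}_{ij}(\mathbf{Z}_i^{(l)}-\mathbf{Z}_j^{(l)})$;
\item at the end, it collapses the weighted sum of identical per-neighbor bounds into a single bound.
\end{itemize}
Your general-weight induction is therefore the cleaner formulation.
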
 
Since cross‑entropy loss  $\mathcal{L}(\mathbf{Z}_{i}^{(l)}\mathbf{W}_{c} , \mathbf{Y}_{i}) = -\text{log}(\mathbf{Y}_{i}\cdot \text{softmax}(\mathbf{Z}_{i}^{(l)}\mathbf{W}_{c}))$ is $\eta$-Lipschitz continuous~\cite{gao2017properties,mao2023cross}, we have:
\begin{equation}
    |\mathcal{L}(\mathbf{Z}_{i}^{(l)}\mathbf{W}_{c} , \mathbf{Y}_{i}) - \mathcal{L}(\mathbf{Z}_{i}^{(l+1)}\mathbf{W}_{c} , \mathbf{Y}_{i})| \le \eta\lVert \mathbf{Z}_{i}^{(l)}\mathbf{W}_{c} - \mathbf{Z}_{i}^{(l+1)}\mathbf{W}_{c} \rVert_{2},
\end{equation}
where $\eta=\sqrt{2}$ denotes the Lipschitz constant.
The aggregated representation of a node $v_i$ is $\mathbf{Z}_{i}^{(l+1)} = \sum_{j \in \mathcal{N}(i)} \mathbf{S}_{ij} \mathbf{Z}_{j}^{(l)}$, and $\sum_{v_j \in \mathcal{N}(v_i)} \mathbf{S}_{ij}=1$.
By the triangle inequality,  
\begin{align}
    \lVert \mathbf{Z}_{i}^{(l)}\mathbf{W}_{c} - \mathbf{Z}_{i}^{(l+1)} \mathbf{W}_{c}\rVert_{2} &=\lVert\sum_{v_j \in \mathcal{N}(v_i)} \mathbf{S}_{ij} (\mathbf{Z}_{i}^{(l)} - \mathbf{Z}_{j}^{(l)})\mathbf{W}_{c}\rVert_{2}
    \\
    & \le \sum_{j \in \mathcal{N}(i)} \mathbf{S}_{ij} \lVert(\mathbf{Z}_{i}^{(l)} - \mathbf{Z}_{j}^{(l)}) \rVert_{2}\cdot \lVert\mathbf{W}_{c}\rVert_{2}.
\end{align}
$\forall v_{j} \in \mathcal{N}(v_{i})$,the cosine similarity \(\cos(\mathbf{Z}_{i}^{(l)},\mathbf{Z}_{j}^{(l)}) \ge \epsilon\) 
 and $ \|Z^{(l)}_{i}\| \leq B $, then
\begin{equation}
    \lVert\mathbf{Z}_{i}^{(l)} - \mathbf{Z}_{j}^{(l)}\rVert_2 \leq B\sqrt{2-2\,\cos(\mathbf{Z}_{i}^{(l)},\mathbf{Z}_{j}^{(l)})} \le B\sqrt{2-2\epsilon},
\end{equation}
thus
\begin{equation}
    |\mathcal{L}(\mathbf{Z}_{i}^{(l+1)}\mathbf{W}_{c}, \mathbf{Y}_{i}) - \mathcal{L}(\mathbf{Z}_{i}^{(l)}\mathbf{W}_{c}, \mathbf{Y}_{i})| 
\leq 2 B \lVert \mathbf{W}_{c}\rVert_{2} \sqrt{1 - \epsilon}.
\end{equation}
which completes the proof.
\section{Detailed Implementation of InGSL}
\subsection{Details of the Learnable Function $f(\cdot)$ in Equation (\ref{prob})}
\label{appex:netfunc1}
For the learnable function $f(\cdot)$ in Equation (\ref{prob}), we consider two architectures: the bilinear layer and the multi‑layer perceptron (MLP).
The bilinear layer can be expressed as follows:
\begin{equation}
\label{bifunc}
f(\mathbf{E}_{i},\mathbf{E}_{j}) = \mathbf{E}_{i}\mathbf{W}_{1}\mathbf{E}_{j}^\top,
\end{equation}
where $\mathbf{E}_{i}$ and  $\mathbf{E}_{j}$ denote the embeddings of node $v_{i}$ and its neighbor $v_{j}$ obtained from Equation (\ref{embed}), and $\mathbf{W}_{1}$ denotes the learnable transformation matrix.
Equation (\ref{bifunc}) essentially computes the redundancy between node $v_{i}$ and its neighbors $v_{j}$, and then transforms this redundancy into a diversity score using the learnable matrix $\mathbf{W}_{1}$. To verify this mechanism, we set the parameters of  $\mathbf{W}_{1}$ to be non‑trainable,\ie fixed at 1, and report the results in Table \ref{tab:WI}. We observe that setting $\mathbf{W}_{1} = \mathbf{I}$ leads to a significant performance drop, which demonstrates the effectiveness of using a learnable $\mathbf{W}_{1}$ in Equation (\ref{bifunc}) .

The MLP-based learnable function can be expressed as follows:
\begin{equation}
\label{Eq:mlp}
f(\mathbf{E}_{i},\mathbf{E}_{j}) = \text{MLP}(\mathbf{E}_{i}\oplus\mathbf{E}_{j}),
\end{equation}
where $\oplus$ denotes the concatenation operation.
We experiment both architectures, and find that bilinear function (~Equation (\ref{bifunc}) ) typically achieve better results~(see Section \ref{netfunc}).
\subsection{Complexity Analysis of the InGSL Method}
\label{appex:complex}
Suppose the original constructed graph structure $\mathbf{S}$ contains $m$ edges, the edge reduction level is $r$, $n$ is the number of nodes, and the hidden dimension is $d$. The additional computational complexity introduced by InGSL is $\mathcal{O}\big(nd(d+|\mathcal{B}|+Ld+1)+m(Lrd+d+1)\big)$. Specifically, Equations (\ref{prune}) and (\ref{mask}) each have a complexity of $\mathcal{O}(m)$; Equation (\ref{prob}) has a complexity of $\mathcal{O}(nd^{2}+md)$; Equation (\ref{Eq:MIloss}) has a complexity of $\mathcal{O}\big(nd(|\mathcal{B}|+1)\big)$; and Equation (\ref{Eq:representation}) introduces additional complexity of $\mathcal{O}\big(L(n d^{2} + m r d)\big)$ when computing $\tilde{\mathbf{Z}}_{ij}$. 
Based on the above analysis, the extra complexity introduced by InGSL is lower than the computational complexity of the original GSL model.
Empirical evidence demonstrating the efficiency of InGSL is provided in Section \ref{appex:effi}.
\subsection{Details of The Mutual Information Term in Equation (\ref{Eq:MIloss})}
\label{appx:MIloss}
\(\text{sim}(\cdot)\) denotes the cosine similarity function.  
\(\mathcal{B}\) represents the set of randomly selected negative sample nodes, with \(|\mathcal{B}| < n\).  
\(\sum_{j=1}^k e^{\text{sim}(\tilde{\mathbf{Z}}_{i}, \mathbf{Z}_{j})}\) measures the cumulative similarity between \(\tilde{\mathbf{Z}}_{i}\) and the representations of different negatively sampled nodes in the original constructed graph structure \(\mathbf{S}\).  
Maximizing \(I(\tilde{\mathbf{Z}}; \mathbf{Z})\) ensures that the information provided by the selected neighbors fully captures that of all candidate neighbors, thus preventing information loss during training.
\begin{table}[!t]
	\centering
        \small
        \setlength{\abovecaptionskip}{0.cm}
	\caption{Impact of $\mathbf{W}_{1}$ on bilinear function of InGSL when integrating with SUBLIME model~\cite{liu2022towards}.}
\setlength{\tabcolsep}{0.008\textwidth}
	
{
\begin{tabular}[t!]{c|cc|cc}
        \toprule
        {Dataset} &  \multicolumn{2}{c|}{Citeseer}  &  \multicolumn{2}{c}{Blogcatlog}\\
        \midrule
        {Edge Reduction $\%$} & 30$\%$ & 
        50$\%$  & 
        30$\%$ & 
        50$\%$  
        \\
        \midrule
$\mathbf{W}_{1} = \mathbf{I}$    & 72.40±0.14 & 72.30±0.99 &  {95.04±0.19} & {94.38±0.62} \\
InGSL  & \textbf{73.05±0.33} & \textbf{73.36±0.49}  & \textbf{95.79±0.16} & \textbf{95.48±0.28}\\ 
\bottomrule
\end{tabular}
}
\label{tab:WI}
\vspace{-10pt}
\end{table}

\section{Additional Experimental Results}
\label{appex:addresult}

\subsection{Results with 70 $\%$ Edge Reduction Level}
We evaluate the InGSL performance at an edge reduction level of 70$\%$, and present the results in Table \ref{Tab:addperformance}.
\subsection{Results on the Roman-Empire Dataset}
\label{appex:roman}
We evaluate the performance of InGSL on the Roman‑empire dataset, with the results summarized in Table \ref{tab:roman}.
\subsection{Robustness Analysis \wrt Random Feature Noise.}
\label{appex:featnoise}
Figure \ref{fig:robust2} shows the performance of InGSL under different ratios of feature noise, demonstrating that InGSL can further enhance robustness to random feature masking.
\subsection{Effiency Analysis (RQ5)}
\label{appex:effi}
To evaluate the extra computational overhead that InGSL adds to GSL models, we test its time and memory efficiency on two large-scale datasets: Ogbn-Arxiv and Pubmed.
To assess time efficiency, we measure the convergence time of each algorithm, \ie the time required to achieve optimal performance on the validation set.
As shown in Table \ref{tab:effi}, InGSL incurs only a moderate increase in convergence time (29$\%$ on average) and training memory usage (30$\%$ on average) compared with the base GSL models.
These results demonstrate that InGSL remains efficient on convergence time and memory when integrated into base GSL models.
\begin{table}[!t]
	\centering
        \small
        \setlength{\abovecaptionskip}{0.cm}
        \renewcommand\arraystretch{1.}

	\caption{Convergence time and GPU memory consumption
of GSL models with InGSL on Pubmed and Ogbn-arxiv datasets.}
	\label{tab:effi}
	\resizebox{\linewidth}{!}
{
\begin{tabular}[t!]{c|cccc}
        \toprule
        {Datasets}&  \multicolumn{2}{c}{Pubmed} &  \multicolumn{2}{c}{Ogbn-arxiv} \\
        \midrule
        {} & Time(s) & 
        Memory(MB) & Time(s) & Memory(MB) )\\
        \midrule
IDGL    & 41 & 9,376 &{121} & {14,752} \\
IDGL+InGSL & {55} & {11,976} &{159} & {19,856} \\
\midrule
SUBLIME & 295 & 3,476 &{822} & {14,608} \\
SUBLIME+InGSL & {375} & 4,186 & {1,026} & {19,904}  \\
\bottomrule
\end{tabular}
}
\end{table}
\begin{table}[htp]
\renewcommand{\arraystretch}{1.2}
	\centering
    \small
    \setlength{\abovecaptionskip}{0.cm}
\setlength{\tabcolsep}{0.0025\textwidth}
\caption{Accuracy comparison(\%) of 70$\%$ edge reduction level.}
\label{Tab:addperformance}
{
\begin{tabular}[ht]{cccccc}
        \toprule
        {Dataset} &  \multicolumn{1}{c}{Cora} &  \multicolumn{1}{c}{Citeseer} & \multicolumn{1}{c}{Pubmed}  &  \multicolumn{1}{c}{Blogcatalog} & \multicolumn{1}{c}{Ogbn-arxiv} \\
        \midrule
GRCN    & 82.63 & 71.35 &{63.74} & {71.40} &OOM  \\
GRCN+InGSL & \textbf{85.07} & \textbf{72.88} &\textbf{78.61} & \textbf{76.30} &OOM  \\
\midrule
IDGL & 83.80 & 71.70 &{82.23} & {88.77} &70.05  \\
IDGL+InGSL & \textbf{84.45} & \textbf{73.10} & \textbf{82.37} & \textbf{91.79} &\textbf{70.12}  \\
\midrule
SUBLIME    & 81.41 & 71.75 &{75.97} & {92.19} &{69.42} \\
SUBLIME+InGSL & \textbf{83.10} & \textbf{73.16} &\textbf{79.80} & \textbf{95.24} &\textbf{70.56} \\
\midrule
RWGSL &{82.39} &{71.63} & {78.86} & {74.45} &{69.77}  \\
RWGSL+InGSL & \textbf{84.10} & \textbf{72.25} & \textbf{79.57} & \textbf{92.42} &\textbf{70.09}  \\
\midrule
UnGSL & 84.21 & 72.61  &{76.64} & {72.43} &OOM  \\
UnGSL+InGSL & \textbf{84.55} & \textbf{72.80} & \textbf{78.57} & \textbf{73.60} &OOM  \\
\midrule
PROSE & 81.57 & 71.91  &{82.65} & {75.21} &67.47  \\
PROSE+InGSL & \textbf{81.62} & \textbf{72.70} & \textbf{82.91} & \textbf{76.51} &\textbf{69.87}  \\
\bottomrule
\end{tabular}
}
\end{table}

\begin{table}[!t]
	\centering
        \small
        \setlength{\abovecaptionskip}{0.cm}
	\caption{Performance Comparison Between InGSL and similarity-based neighbor selection on the Roman-Empire dataset.}
\setlength{\tabcolsep}{0.008\textwidth}
	
{
\begin{tabular}[t!]{c|ccc}
        \toprule
        {Edge Reduction $\%$} & 30$\%$ & 
        50$\%$  & 
        70$\%$  
        \\
        \midrule
GRCN    & 43.24±0.05& 43.00±0.51 &  {31.20±1.10}  \\
GRCN+InGSL  & \textbf{45.35±0.15} & \textbf{45.92±0.94}  & \textbf{32.34±0.54}\\ 
\midrule
IDGL   & 60.46±0.24 & 59.86±0.19 &  {59.32±0.00}  \\
IDGL+InGSL  & \textbf{64.36±0.69} & \textbf{64.73±0.51}  & \textbf{63.53±1.55} \\ 
\midrule
SUBLIME    & 64.59±0.69 & 64.47±0.35 &  {64.26±0.29}\\
SUBLIME+InGSL  & \textbf{64.94±0.25} & \textbf{64.86±0.49}  & \textbf{65.51±0.36} \\ 
\bottomrule
\end{tabular}
}
\label{tab:roman}
\vspace{0pt}
\end{table}
\begin{figure}[ht]
    \vspace{0pt}
    \centering    \includegraphics[width=0.8\linewidth, trim = 45 15 40 10]{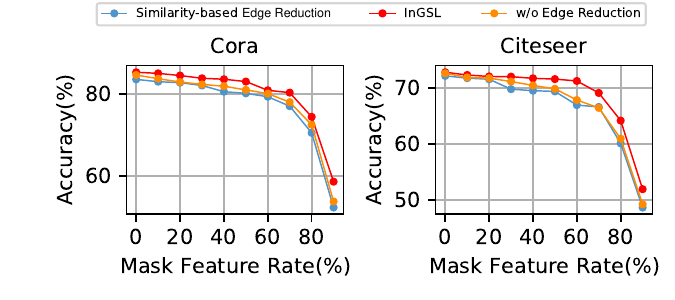}
    \caption{Robustness analysis with random feature noise
injection on Cora and Citeseer dataset.}
    \label{fig:robust2}
    \vspace{-0pt}
\end{figure}

\end{document}